\title{\Huge Deep Tensor Network}
\author{Yifan Zhang}
\abstract{The quadratic complexity of dot-product attention introduced in Transformer remains a fundamental bottleneck impeding the progress of foundation models toward unbounded context lengths. Addressing this challenge, we introduce the \emph{Deep Tensor Network}, a new architectural framework that fundamentally reformulates attention by unifying the expressive power of tensor algebra with neural network design. Our approach moves beyond both conventional dot-product attention and subsequent linear-time approximations to capture higher-order statistical dependencies. We introduce two core operators derived from this framework: \emph{Tensor Attention}, which models complex token-mixing via data-dependent polynomial kernels, and \emph{Tensor Interaction}, a novel mechanism for adaptive channel-mixing. We demonstrate that these operators are powered by second-order summaries that entirely bypass the formation of $n \times n$ matrices, enabling a causality-preserving streaming implementation with $O(d^2)$ per-token updates and $O(d^2)$ state. This efficiency rivals that of modern State Space Models while retaining an attention-like formulation. The Deep Tensor Network thus provides a principled and powerful new class of building blocks for next-generation sequence models, bridging the gap between scalable computation and rich, expressive interaction modeling.}
\begin{document}
\maketitle



\section{Introduction}

Deep learning research continually seeks neural network architectures that are both efficient and effective. Central to this endeavor is the rigorous study of the underlying algebraic structures, namely, vector spaces and their tensor products, which are instrumental in understanding and enhancing the capabilities of neural networks. In this work, we present a comprehensive framework termed the \emph{Deep Tensor Network}, which extends classical attention mechanisms by leveraging the rich expressivity of tensor operations.

Unless otherwise stated, vector spaces and matrices are over the real field $ \RR $. Complex-valued extensions are collected in a separate subsection; there we use $^{\mathrm H}$ for conjugate transpose and ensure shapes are unchanged.

The seminal work of \citet{DBLP:conf/nips/VaswaniSPUJGKP17} introduced the scaled dot-product attention mechanism that has since become a cornerstone in sequence modeling. Despite its widespread success, the quadratic computational complexity inherent in conventional dot-product attention poses significant challenges when scaling to longer sequences or more complex interactions. Motivated by these limitations, recent studies have explored alternative formulations that maintain performance while improving efficiency \citep{katharopoulos2020transformers}.

At the heart of our approach lies the systematic exploration of tensor products between finite-dimensional vector spaces. Let $\mathcal{X}$ and $\mathcal{Y}$ denote two such spaces, whose tensor product $\mathcal{X} \otimes \mathcal{Y}$ encapsulates higher-order interactions between feature representations. The algebraic and structural properties of these tensor products, characterized via homomorphism relations and universal mapping properties, provide a robust foundation for the design of novel deep learning components.

In contrast to traditional attention mechanisms, our proposed framework introduces two central concepts: \emph{Tensor Attention} and \emph{Tensor Interaction}. Tensor Attention generalizes the conventional dot-product formulation by incorporating tensor products to capture complex, multi-way dependencies among tokens. This formulation not only enriches the model’s representational capacity but also facilitates a reduction in computational complexity from quadratic to a more scalable form. Meanwhile, Tensor Interaction is designed to further modulate the interdependencies between query and key representations through tensor-based renormalization strategies, ensuring that higher-order interactions are efficiently exploited.

Our contributions can be summarized as follows:
\begin{itemize}
    \item We analyze the tensor product of vector spaces and demonstrate how its inherent properties can be harnessed to reformulate attention mechanisms.
    \item We introduce the Tensor Attention mechanism, which extends conventional attention by embedding tensor operations to model higher-order dependencies, thereby achieving both enhanced expressivity and improved computational efficiency.
    \item We propose the Tensor Interaction operator, a complementary construct that modulates the interaction dynamics between tokens through structured tensor transformations.
\end{itemize}

By integrating these tensor-based formulations into the attention paradigm, our framework bridges the gap between theory and practical deep learning applications. This synthesis not only contributes to a deeper understanding of neural network architectures but also opens up new avenues for developing models that are both theoretically sound and empirically robust. In the subsequent sections, we elaborate on the mathematical foundations underlying our approach and present detailed analyses of the proposed mechanisms. 

\section{Preliminaries}

\subsection{Scaled Dot-Product Attention}

In \citep{DBLP:conf/nips/VaswaniSPUJGKP17}, the authors introduced the \emph{Scaled Dot-Product Attention} mechanism, which has become a cornerstone in modern deep learning architectures. Given queries, keys, and values of dimensions $d_q$, $d_k$, and $d_v$ respectively, the mechanism computes the dot products between each query and all keys, scales the result by $\sqrt{d_k}$, and applies a softmax normalization. In matrix form, this operation is expressed as:
\[
\operatorname{Attention}(\Qb, \Kb, \Vb) = \operatorname{softmax}\left(\frac{\Qb \Kb^{\top}}{\sqrt{d_k}}\right) \Vb.
\]
Assume that
\[
\Qb = \begin{bmatrix}
\mathbf{q}_1^\top \\[1mm]
\mathbf{q}_2^\top \\[1mm]
\vdots \\[1mm]
\mathbf{q}_n^\top
\end{bmatrix} \in \mathbb{R}^{n \times d_{\text{model}}},
\]
with $n$ denoting the number of tokens. Then, the output for the $i$th token is given by
\[
\operatorname{Attention}(\Qb, \Kb, \Vb)_i = \sum_{j=1}^N \operatorname{Softmax}(\mathbf{q}_i^\top \mathbf{k}_j) \, \mathbf{v}_j,
\]
where, for brevity, we define
\[
\operatorname{Softmax}(\mathbf{q}_i^\top \mathbf{k}_j) \triangleq \frac{\exp\left(\mathbf{q}_i^\top \mathbf{k}_j\right)}{\sum_{j=1}^N \exp\!\left(\mathbf{q}_i^\top \mathbf{k}_j\right)}.
\]

\subsection{Multi-Head Attention}

The multi-head attention mechanism~\citep{DBLP:conf/nips/VaswaniSPUJGKP17} enables the model to jointly attend to information from different representation subspaces. Formally, it is defined as:
\[
\begin{aligned}
\operatorname{MultiHead}(\Qb, \Kb, \Vb) &= \operatorname{Concat}\!\left(\operatorname{head}_1, \ldots, \operatorname{head}_h\right) \Wb^O, \\
\text{with} \quad \operatorname{head}_i &= \operatorname{Attention}\!\left(\Qb \Wb_i^Q, \Kb \Wb_i^K, \Vb \Wb_i^V\right).
\end{aligned}
\]
The projection matrices satisfy $\Wb_i^Q \in \mathbb{R}^{d \times d_q}$, $\Wb_i^K \in \mathbb{R}^{d \times d_k}$, $\Wb_i^V \in \mathbb{R}^{d \times d_v}$, and $\Wb^O \in \mathbb{R}^{d \times d}$. In common practice, one sets $d = 512$, $h = 8$, and $d_q = d_k = d_v = d/h = 64$.
\\[0.25em]
\textbf{Notation.} Throughout, $d$ denotes the \emph{per-head} channel size with $d = d_{\text{model}}/h$. Unless explicitly noted, all operators and complexities are stated per head. We avoid materializing any $n\times n$ map unless stated.

\subsection{Kernelized Scaled Dot-Product Attention}

An alternative formulation replaces the conventional softmax normalization with a kernel-based similarity measure. In this framework, a mapping $\phi \colon \mathbb{R}^d \to \mathbb{R}^r$ is employed so that for a given query-key pair $(\mathbf{q}_i, \mathbf{k}_j)$ the similarity is computed as
\[
\operatorname{sim}(\mathbf{q}_i, \mathbf{k}_j) = \phi(\mathbf{q}_i)^\top \phi(\mathbf{k}_j).
\]
For example, by approximating the exponential function via its first-order Taylor expansion $e^x \approx 1+x$, one may choose
\[
\phi(\mathbf{q}_i)^\top \phi(\mathbf{k}_j) \approx 1 + \left(\frac{\mathbf{q}_i}{\|\mathbf{q}_i\|_2}\right)^\top \frac{\mathbf{k}_j}{\|\mathbf{k}_j\|_2}.
\]
Thus, the attention output for the $i$th token is reformulated as
\[
\operatorname{Attention}(\Qb, \Kb, \Vb)_i = \sum_{j=1}^N \frac{\phi(\mathbf{q}_i)^\top \phi(\mathbf{k}_j)}{\sum_{j=1}^N \phi(\mathbf{q}_i)^\top \phi(\mathbf{k}_j)} \, \mathbf{v}_j.
\]

\subsection{Linear Attention}

Traditional attention mechanisms incur quadratic computational complexity with respect to the sequence length, primarily due to the softmax operation. Linear attention methods mitigate this limitation by leveraging kernel approximations.

\subsubsection{Linear Attention via Kernel Approximation}

In models such as Performers, the softmax function is approximated by a positive feature map $\phi(\cdot)\ge 0$, yielding the linear-time formulation
\[
\operatorname{Attention}(\Qb,\Kb,\Vb)\;\approx\;
\operatorname{Diag}\!\Big(\phi(\Qb)\,[\phi(\Kb)^\top \mathbf 1]\Big)^{-1}\;
\phi(\Qb)\,\big(\phi(\Kb)^\top \Vb\big),
\]
where the denominator ensures row-wise normalization. The mapping $\phi(\cdot)$ projects inputs into a feature space whose inner products approximate the softmax similarity; nonnegativity of $\phi$ guarantees nonnegative weights and numerical stability. This reduces complexity from quadratic to linear in the sequence length.

\section{Tensor Category}

\subsection{Tensor Product}

\begin{definition}
Let $V$ and $W$ be vector spaces over a field $\mathbb{F}$. Their \emph{tensor product}, denoted by $V \otimes W$, is defined as a vector space equipped with a bilinear map
$$
\otimes : V \times W \to V \otimes W,
$$
which satisfies the following \emph{universal property}: for every vector space $Z$ and every bilinear map $\tau : V \times W \to Z$, there exists a unique linear map $\tilde{\tau} : V \otimes W \to Z$ such that
$$
\tau(v,w) = \tilde{\tau}(v \otimes w) \quad \text{for all } v \in V,\; w \in W.
$$
\end{definition}

In other words, every bilinear map from $V \times W$ to an arbitrary vector space $Z$ factors uniquely through the tensor product. 

\subsection{Universal Property and Uniqueness of the Tensor Product}

The strength of the tensor product arises not only from its bilinear construction but also from its universal property, which guarantees uniqueness up to canonical isomorphism.

\begin{definition}
A tensor product $V \otimes W$, together with the bilinear map $\otimes : V \times W \to V \otimes W$, is said to satisfy the \emph{universal property} if for every vector space $Z$ and every bilinear map $\tau : V \times W \to Z$, there exists a unique linear map $\tilde{\tau} : V \otimes W \to Z$ such that
$$
\tau(v,w) = \tilde{\tau}(v \otimes w) \quad \forall\, (v,w) \in V \times W.
$$
\end{definition}

\begin{theorem}[Uniqueness Theorem of Tensor Products]
Let $U$ and $T$ be two tensor products of $V$ and $W$, each satisfying the universal property. Then, there exists a canonical isomorphism between $U$ and $T$.
\end{theorem}

\begin{proof}
Since both $U$ and $T$ satisfy the universal property, there exist unique linear maps $\alpha: U \to T$ and $\beta: T \to U$ making the corresponding diagrams commute. Uniqueness implies that $\alpha\circ\beta = \mathrm{id}_T$ and $\beta\circ\alpha = \mathrm{id}_U$, so $\alpha$ and $\beta$ are inverses of each other. Hence, $U$ and $T$ are canonically isomorphic.
\end{proof}

\begin{corollary}
The tensor product is symmetric; that is, $V \otimes W$ is canonically isomorphic to $W \otimes V$.
\end{corollary}

\begin{proof}
Define the bijection $\sigma: V \times W \to W \times V$ by $\sigma(v,w) = (w,v)$. Given that $V \otimes W$ satisfies the universal property, for any bilinear map $\rho: W \times V \to Z$, the composition $\rho \circ \sigma$ is bilinear. By the universal property, there exists a unique linear map that factors $\rho \circ \sigma$ through $V \otimes W$. This same argument shows that $W \otimes V$ satisfies the universal property for the pair $(V,W)$, so by the uniqueness theorem, $V \otimes W \cong W \otimes V$.
\end{proof}

\begin{lemma}[\citet{bradley2020interface}]
Let $V$ and $W$ be finite-dimensional vector spaces. Then, there is an isomorphism
$$
\operatorname{End}(V \otimes W) \cong \operatorname{End}(V) \otimes \operatorname{End}(W).
$$
\end{lemma}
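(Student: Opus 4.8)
The plan is to reduce everything to the elementary identification $\operatorname{End} U \cong U \otimes U^{*}$, valid for any finite-dimensional vector space $U$, together with the compatibility of duals and tensor products in finite dimensions. First I would recall that for finite-dimensional $U$ the map $U \otimes U^{*} \to \operatorname{End} U$ sending $u \otimes \varphi$ to the rank-one operator $x \mapsto \varphi(x)\,u$ is an isomorphism: it is the $Z = \operatorname{End} U$ instance of the universal property applied to the manifestly bilinear map $(u,\varphi) \mapsto (x \mapsto \varphi(x)u)$, and it is bijective because it carries the basis $\{e_i \otimes e_j^{*}\}$ to the matrix-unit basis of $\operatorname{End} U$.

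Next I would invoke, in the same finite-dimensional spirit, the natural isomorphism $(V \otimes W)^{*} \cong V^{*} \otimes W^{*}$: the bilinear map $(\varphi,\psi) \mapsto \big(v \otimes w \mapsto \varphi(v)\psi(w)\big)$ induces, via the universal property, a linear map $V^{*} \otimes W^{*} \to (V \otimes W)^{*}$, again an isomorphism by a basis/dimension check. Chaining these, $\operatorname{End}(V \otimes W) \cong (V \otimes W) \otimes (V \otimes W)^{*} \cong (V \otimes W) \otimes (V^{*} \otimes W^{*})$, and then I would use associativity of the tensor product together with the commutativity isomorphism established in the Corollary above to shuffle the four factors into $(V \otimes V^{*}) \otimes (W \otimes W^{*}) \cong \operatorname{End} V \otimes \operatorname{End} W$. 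Each rearrangement is one of the canonical isomorphisms guaranteed by the Uniqueness Theorem of Tensor Products, so the composite is canonical.

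As a cross-check I would also exhibit the isomorphism directly: the map $\operatorname{End} V \times \operatorname{End} W \to \operatorname{End}(V \otimes W)$, $(f,g) \mapsto f \otimes g$ where $(f \otimes g)(v \otimes w) = f(v) \otimes g(w)$, is bilinear, hence factors through a linear map $\Phi \colon \operatorname{End} V \otimes \operatorname{End} W \to \operatorname{End}(V \otimes W)$. Choosing bases of $V$ and $W$ and the induced matrix-unit bases, $\Phi$ sends a basis of the left-hand space bijectively to the matrix-unit basis of $\operatorname{End}(V \otimes W)$ relative to the product basis of $V \otimes W$, so $\Phi$ is an isomorphism.

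The main obstacle — really the only nontrivial point — is verifying bijectivity of these natural maps rather than merely their existence: existence is free from the universal property, but one must still check that no nonzero element of $\operatorname{End} V \otimes \operatorname{End} W$ maps to zero in $\operatorname{End}(V \otimes W)$. In finite dimensions this is handled cleanly by the basis computation above, equivalently by matching dimensions $(\dim V \cdot \dim W)^{2} = (\dim V)^{2}(\dim W)^{2}$ once injectivity is in hand; the finite-dimensionality hypothesis is precisely what makes $U \cong U^{**}$ and $(V \otimes W)^{*} \cong V^{*} \otimes W^{*}$ valid, and is therefore essential.
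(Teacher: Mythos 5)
Your argument matches the paper's proof in essence: both reduce to the finite-dimensional identification $\operatorname{Hom}(A,B) \cong B \otimes A^{*}$ (hence $\operatorname{End} U \cong U \otimes U^{*}$), distribute duals over tensor products, and reorder factors to get $\operatorname{End}(V \otimes W) \cong (V \otimes V^{*}) \otimes (W \otimes W^{*}) \cong \operatorname{End} V \otimes \operatorname{End} W$. You supply more detail than the paper does — the explicit rank-one-operator isomorphism, the $(V\otimes W)^{*} \cong V^{*}\otimes W^{*}$ step, and the direct $(f,g)\mapsto f\otimes g$ cross-check — but the underlying chain of canonical isomorphisms is the same.
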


\begin{proof}
For finite-dimensional spaces $A$ and $B$, one has $\operatorname{Hom}(A,B) \cong B \otimes A^*$. Hence,
$$
\operatorname{End}(V \otimes W) \cong (V \otimes W) \otimes (V \otimes W)^* \cong V \otimes W \otimes V^* \otimes W^*.
$$
Rearranging the factors yields
$$
V \otimes V^* \otimes W \otimes W^* \cong \operatorname{End}(V) \otimes \operatorname{End}(W).
$$
\end{proof}

\subsection{Partial Trace}

\begin{definition}[Partial Trace~\citep{bradley2020interface}]
Let $V$ and $W$ be vector spaces of dimensions $m$ and $n$, respectively, with orthonormal bases $\{e_i\}_{i=1}^m$ for $V$ and $\{f_j\}_{j=1}^n$ for $W$. Denote by $\mathrm{L}(A)$ the space of linear operators on $A$. For an operator $T \in \mathrm{L}(V \otimes W)$ expressed as
$$
T = \sum_{\substack{1\le k,i\le m \\ 1\le l,j\le n}} T_{kl,ij} \, \operatorname{vec}(e_k \otimes f_l)\operatorname{vec}(e_i \otimes f_j)^{\mathrm{H}},
$$
the \emph{partial trace} over $W$ is defined by
$$
\operatorname{Tr}_W(T) = \sum_{k,i=1}^{m} \Biggl(\sum_{j=1}^{n} T_{kj,ij}\Biggr) e_k e_i^{\mathrm{H}}.
$$
Similarly, the partial trace over $V$ is given by
$$
\operatorname{Tr}_V(T) = \sum_{l,j=1}^{n} \Biggl(\sum_{i=1}^{m} T_{il,ij}\Biggr) f_l f_j^{\mathrm{H}}.
$$
\end{definition}

\subsection{Tensor Simplification for Computational Feasibility}

When working with high-dimensional tensor products, the computational cost can quickly become prohibitive. In this subsection, we describe a sequence of transformations that simplify tensor expressions while preserving essential structure.

\paragraph{Remark on usage.}
We employ standard vec/Kronecker identities to reason about shapes and equalities; the categorical preliminaries (universal properties, partial trace) provide background but are not required for the streaming constructions below. 

Let $ \Qb, \Kb \in \mathbb{R}^{n \times d} $. To mitigate the computational complexity associated with tensor products, we consider summaries that map high-order objects into more tractable representations using standard vec/Kronecker identities.

Consider the operator
\[
\Tb \in \mathcal{L}(\Qb \otimes \Kb),
\]
and define
\[
\mathbf{T}_{(n \times d)^{\otimes 2}} \coloneqq (\Qb \otimes \Kb) \otimes (\Qb \otimes \Kb).
\]
Here, the symbol $\simeq$ will denote canonical isomorphisms of vector spaces (not identifications of data matrices). We will rely on the exact identity
\[
\operatorname{vec}(A X B^\top) = (B \otimes A)\,\operatorname{vec}(X),
\]
to move between tensor and matrix representations without conflating $Q\otimes K$ with outer products or Gram matrices.

In particular, $\,\Qb\Kb^{\top}\in\mathbb{R}^{n\times n}$ is a Gram-like matrix and should not be identified with $\Qb\otimes \Kb$ nor with $\operatorname{vec}(\Qb)\,\operatorname{vec}(\Kb)^\top$. With this caveat, we consider the following (information-reducing) \emph{summary}:
\[
\mathbf{T}_{(n \times d) \times (n \times d)} = (\Qb\Kb^{\top}) \otimes (\Qb\Kb^{\top}).
\]
While $\operatorname{vec}(AXB^\top)=(B\otimes A)\operatorname{vec}(X)$ is exact, the replacement
\[
(\Qb\Kb^\top)\otimes(\Qb\Kb^\top)\;\rightsquigarrow\; \operatorname{vec}(\Qb\Kb^\top)\,\operatorname{vec}(\Qb\Kb^\top)^{\top}
\]
is a \emph{design choice} (rank-1 contraction) rather than an isomorphism. We therefore \emph{define} the second-order summaries
\[
(\Qb\Kb^\top)\otimes(\Qb\Kb^\top)\;\mapsto\; (\Qb\Kb^\top)(\Qb\Kb^\top)^{\top}.
\]
Accordingly, we define the simplified tensors
\[
\mathbf{T}_{n \times n} \coloneqq \Qb\Kb^{\top}(\Qb\Kb^{\top})^{\top}, \quad \text{and} \quad \mathbf{T}^{\odot}_{n \times n} \coloneqq \Qb\Kb^{\top} \odot (\Qb\Kb^{\top})^{\top},
\]
where $ \odot $ denotes the Hadamard (element-wise) product.

\subsection{Tensor Simplification Using Tensor Network Notation} We now reinterpret the above tensor simplification using tensor network diagrams. 

\paragraph{Original Tensor:} The tensor $\mathbf{T}_{(n \times d) \times (n \times d) \times (n \times d) \times (n \times d)}$ is represented by tensor nodes corresponding to $\mathbf{Q}$ and $\mathbf{K}$, each with two legs corresponding to indices $n$ and $d$. The tensor product $\mathbf{Q} \otimes \mathbf{K}$ is depicted by two nodes without shared edges, and forming $\mathbf{T}$ involves a four-way contraction, yielding a node with four legs. 

\paragraph{First Simplification:} By contracting the $d$-dimensional legs of $\mathbf{Q}$ and $\mathbf{K}$, we obtain the matrix $\mathbf{Q}\mathbf{K}^{\top}$, which is represented as a node with two legs (indexed by $n$). 

\paragraph{Further Simplification:} Taking two copies of $\mathbf{Q}\mathbf{K}^{\top}$ and contracting along the $n$-dimensional legs produces the simplified tensor $\mathbf{T}_{n \times n}$, represented by a node with two legs (each corresponding to $n$).

\subsubsection{Tensor Network Diagrams}

\paragraph{Original Tensor $\mathbf{T}$:}
\hspace{20ex}

\begin{tikzpicture}
    \node[draw, circle, label=below: $ \mathbf{Q} $] (Q1) {};
    \node[draw, circle, right=2cm of Q1, label=below: $ \mathbf{K} $] (K1) {};
    \node[draw, circle, right=2cm of K1, label=below: $ \mathbf{Q} $] (Q2) {};
    \node[draw, circle, right=2cm of Q2, label=below: $ \mathbf{K} $] (K2) {};
    
    \node[above=0.5cm of Q1] {$ n $};
    \node[right=0.5cm of Q1] {$ d $};
    \node[right=1.0cm of Q1] {$ \otimes $};
    \node[above=0.5cm of K1] {$ n $};
    \node[right=0.5cm of K1] {$ d $};
    \node[right=1.0cm of K1] {$ \otimes $};
    \node[above=0.5cm of Q2] {$ n $};
    \node[right=0.5cm of Q2] {$ d $};
    \node[right=1.0cm of Q2] {$ \otimes $};
    \node[above=0.5cm of K2] {$ n $};
    \node[right=0.5cm of K2] {$ d $};
    
    \draw[-] (Q1.north) -- ++(0,0.5);
    \draw[-] (Q1.east) -- ++(0.5,0);
    \draw[-] (K1.north) -- ++(0,0.5);
    \draw[-] (K1.east) -- ++(0.5,0);
    \draw[-] (Q2.north) -- ++(0,0.5);
    \draw[-] (Q2.east) -- ++(0.5,0);
    \draw[-] (K2.north) -- ++(0,0.5);
    \draw[-] (K2.east) -- ++(0.5,0);
    
    \node[draw, ellipse, minimum width=1cm, minimum height=0.5cm, right=4cm of K2, label={[yshift=-10pt]below: $ \mathbf{T}_{(n \times d) \times (n \times d) \times (n \times d) \times (n \times d)} $}] (T) {};

    \node[left=1cm of T] {$ \Rightarrow $};
    
    \draw[-] (T.120) -- ++(0,0.5);
    \draw[-] (T.60) -- ++(0,0.5);
    \draw[-] (T.150) -- ++(0,0.5);
    \draw[-] (T.30) -- ++(0,0.5);
    \draw[-] (T.240) -- ++(0,-0.5);
    \draw[-] (T.300) -- ++(0,-0.5);
    \draw[-] (T.210) -- ++(0,-0.5);
    \draw[-] (T.330) -- ++(0,-0.5);
\end{tikzpicture}

\paragraph{First Simplification to $\mathbf{Q}\mathbf{K}^{\top}$:}
\hspace{20ex}

\begin{tikzpicture}
    \node[draw, circle, label=below: $ \mathbf{Q} $] (Q1) {};
    \node[draw, circle, right=of Q1, label=below: $ \mathbf{K}^{\top} $] (K1) {};
    
    \node[draw, circle, right=2cm of K1, label=below: $ \mathbf{Q} $] (Q2) {};
    \node[draw, circle, right=of Q2, label=below: $ \mathbf{K}^{\top} $] (K2) {};
    \node[left=0.75cm of Q2] {$ \otimes $};

    \node[above=0.5cm of Q1] {$ n $};
    \node[above=0.5cm of K1] {$ n $};
    \node[above=0.5cm of Q2] {$ n $};
    \node[above=0.5cm of K2] {$ n $};
    
    \draw[-] (Q1.north) -- ++(0,0.5);
    \draw[-] (K1.north) -- ++(0,0.5);
    \draw[-] (Q2.north) -- ++(0,0.5);
    \draw[-] (K2.north) -- ++(0,0.5);
    
    \draw[-] (Q1) -- (K1);
    \draw[-] (Q2) -- (K2);
    
    \node[draw, ellipse, minimum width=1cm, minimum height=0.5cm, right=4cm of K2, label=below: $ \mathbf{QK}^{\top} $] (QK1) {};
    \node[draw, ellipse, minimum width=1cm, minimum height=0.5cm, right=2cm of QK1, label=below: $ \mathbf{QK}^{\top} $] (QK2) {};

    \node[left=0.75cm of QK2] {$ \otimes $};
    \node[left=1cm of QK1] {$ \Rightarrow $};
    
    \draw[-] (QK1.60) -- ++(0,0.5);
    \draw[-] (QK1.120) -- ++(0,0.5);
    \draw[-] (QK2.60) -- ++(0,0.5);
    \draw[-] (QK2.120) -- ++(0,0.5);
\end{tikzpicture}

\paragraph{Further Simplification to $\mathbf{T}_{n \times n}$:}
\hspace{10ex}

\begin{tikzpicture}
    \node[draw, circle, label=below: $ \mathbf{QK}^{\top} $] (QK1) {};
    \node[draw, circle, right=of QK1, label=below: $ \mathbf{QK}^{\top} $] (QK2) {};
    
    \node[above=0.5cm of QK1] {$ n $};
    \node[above=0.5cm of QK2] {$ n $};
    
    \draw[-] (QK1.north) -- ++(0,0.5);
    \draw[-] (QK2.north) -- ++(0,0.5);
    \draw[-] (QK1) -- (QK2);
    
    \node[draw, ellipse, minimum width=1cm, minimum height=0.5cm, right=4cm of QK2, label=below: $ \mathbf{T}_{n \times n} $] (Tnn) {};
    
    \node[left=1cm of Tnn] {$ \Rightarrow $};
    \draw[-] (Tnn.60) -- ++(0,0.5);
    \draw[-] (Tnn.120) -- ++(0,0.5);
\end{tikzpicture}

\section{Tensor Attention}

In this section, we introduce the \emph{Tensor Attention} mechanism, a generalization of the conventional dot-product attention that leverages tensor operations to capture higher-order interactions between tokens. In contrast to standard attention, where the similarity between queries and keys is computed solely via inner products, Tensor Attention employs tensor products and subsequent renormalization to yield a richer representation while maintaining linear computational complexity.

\subsection{Expressivity and interpretation}
\label{subsec:expressivity}
The core token mixer
\[
\Tb_{\Qb}=\Qb(\Kb^\top\Kb)\Qb^\top=(\Qb\Kb^\top)(\Qb\Kb^\top)^\top
\]
is a symmetric, positive semidefinite Gram operator over tokens with
$\operatorname{rank}(\Tb_{\Qb})\le d$. This realizes a \emph{kernel diffusion} across tokens with degree-2, data-dependent kernel
$K_{ij}=\mathbf q_i^\top(\Kb^\top\Kb)\mathbf q_j$.
Unlike vanilla dot-product attention, which can express highly directional and typically full-rank token-to-token maps, the diffusion view is low-rank and symmetric in its kernel; row normalization induces a non-symmetric stochastic matrix but does not recover directional \emph{scores} by itself. We therefore expose an interpolation with a retrieval-style branch:
\[
\widetilde{\Ab}
~=~ \alpha\,\operatorname{RowNorm}(\Qb\Kb^\top)
~+~(1-\alpha)\,\operatorname{RowNorm}\big(\Qb(\Kb^\top\Kb)\Qb^\top\big),
\qquad \alpha\in[0,1],
\]
and use $\operatorname{TensorAttention}(\Qb,\Kb,\Vb)=\widetilde{\Ab}\,\Vb$. The interpolation restores directional selectivity while retaining diffusion stability; $\alpha$ can be learned per head.

\subsection{Definition and Formulation}

Let the input consist of queries, keys, and values of dimensions $ d_q $, $ d_k $, and $ d_v $ respectively. Denote
\[
\Qb = \begin{bmatrix} \mathbf{q}_1^\top \\ \mathbf{q}_2^\top \\ \vdots \\ \mathbf{q}_n^\top \end{bmatrix} \in \RR^{n \times d_{\text{model}}},
\]
where $ n $ is the number of tokens. In the Tensor Attention framework, the standard dot-product attention is augmented by computing a tensor $\Tb$ from $\Qb$ and $\Kb$. In particular, we define
\[
\operatorname{TensorAttention}(\Qb, \Kb, \Vb) = \Bigl(\tr(\Tb)\Bigr)^{-1} \Tb \, \Vb,
\]
where the tensor $\Tb$ is derived via the following parallel computations.

Define the $\Qb$-side tensor operator as
\[
\Tb_{\Qb} = (\Qb \Kb^{\top})(\Qb \Kb^{\top})^{\top} = \Qb \Kb^{\top} \Kb \Qb^{\top},
\]
and similarly, the $\Kb$-side operator as
\[
\Tb_{\Kb} = (\Qb \Kb^{\top})^{\top} (\Qb \Kb^{\top}) = \Kb \Qb^{\top} \Qb \Kb^{\top}.
\]
Both $\Tb_{\Qb}$ and $\Tb_{\Kb}$ are symmetric positive semi-definite operators. For further flexibility, one may also define the element-wise (Hadamard) versions:
\[
\Tb^{\odot}_{\Qb} = (\Qb \Kb^{\top}) \odot (\Kb \Qb^{\top}), \qquad
\Tb^{\odot}_{\Kb} = (\Kb \Qb^{\top}) \odot (\Qb \Kb^{\top}),
\]
where $\odot$ denotes the Hadamard product.

\paragraph{Kernel view (data-dependent).}
Let $\phi_{\Kb}(\mathbf q)\coloneqq \Kb^\top \mathbf q\in\RR^{d}$, a \emph{data-dependent} feature map determined by the current key set. Then
\[
[\Tb_{\Qb}]_{ij}
= \mathbf q_i^\top(\Kb^\top\Kb)\mathbf q_j
= \langle \phi_{\Kb}(\mathbf q_i),\,\phi_{\Kb}(\mathbf q_j)\rangle,
\]
so the $\Qb$-branch realizes a data-dependent degree-2 polynomial kernel over tokens, followed by row-wise normalization. This emphasizes that the operator is a kernelized token mixer (diffusion) rather than per-query selection.

\subsection{Positivity of Diagonal Elements}

We collect a standard PSD fact that subsumes the diagonal non-negativity claims.

\begin{lemma}\label{lem:psd}
For any real matrix $\mathbf{A}$, the matrices $\mathbf{A}\mathbf{A}^\top$ and $\mathbf{A}^\top\mathbf{A}$ are symmetric positive semidefinite. Consequently, their diagonal entries are non-negative.
\end{lemma}
\begin{proof}
For all $\mathbf{x}$, $\mathbf{x}^\top(\mathbf{A}\mathbf{A}^\top)\mathbf{x} = \|\mathbf{A}^\top\mathbf{x}\|_2^2 \ge 0$, and similarly for $\mathbf{A}^\top\mathbf{A}$.
\end{proof}

\subsection{Computational efficiency, complexity, and streaming state}

A key property of the Tensor Attention mechanism is that it can exploit second-order summaries without forming $n\times n$ maps. Using associativity,
\[
(\Qb \Kb^{\top})(\Qb \Kb^{\top})^{\top}\Vb
= \Qb \Kb^{\top}\Kb \Qb^{\top}\Vb
= \Qb(\Kb^{\top}\Kb)(\Qb^{\top}\Vb).
\]
Thus, per head:
\[
\text{build }S^K\!\coloneqq\Kb^{\top}\Kb:~ O(nd^2),\quad
\text{build }S^{QV}\!\coloneqq\Qb^{\top}\Vb:~ O(nd\,d_v),\quad
\text{apply } \Qb(\cdot):~ O(nd\,d_v),
\]
with $O(d^2+d\,d_v)$ \emph{state} and no $O(n^2)$ term provided $\Qb\Kb^\top$ is never materialized.

For autoregressive decoding, maintain prefix summaries
\[
S^K_t=\sum_{i\le t}\mathbf{k}_i\mathbf{k}_i^\top,\qquad
S^{QV}_t=\sum_{i\le t}\mathbf{q}_i\mathbf{v}_i^\top,
\]
updated in $O(d^2+d\,d_v)$ per token with $O(d^2+d\,d_v)$ state. In practice, a low-rank surrogate $S_t^K\approx R_t^\top R_t$ with $R_t\in\RR^{r\times d}$ reduces compute/state to $O(rd)$.

Using cyclic invariance of the trace (i.e., $\tr(\Ab\Bb\Cb)=\tr(\Cb\Ab\Bb)$), one can rewrite
\[
\tr(\Qb \Kb^{\top}\Kb\Qb^{\top}) = \tr(\Kb^\top\Kb\,\Qb^\top\Qb).
\]

\begin{lemma}\label{lem:trace_sum}
For any conformable matrices $\mathbf{A},\mathbf{B}$,
$\tr(\mathbf{A}^\top\mathbf{B})=\langle \mathbf{A},\mathbf{B}\rangle_F=\operatorname{sum}(\mathbf{A}\odot \mathbf{B})$.
In particular,
\[
\tr\!\Big((\Qb \Kb^{\top})(\Qb \Kb^{\top})^{\top}\Big)
~=~ \|\Qb\Kb^\top\|_F^2
~=~ \mathbf 1^\top\!\Big((\Qb^\top\Qb)\odot(\Kb^\top\Kb)\Big)\mathbf 1,
\]
since $\Qb^\top\Qb$ and $\Kb^\top\Kb$ are symmetric.
\end{lemma}

\begin{proof}
The Frobenius inner product identity $\tr(A^\top B)=\sum_{i,j}A_{ij}B_{ij}$ is standard. Applying it to $A=\Qb\Kb^\top$ and $B=A$ yields $\|A\|_F^2$. The Hadamard form follows from $\|A\|_F^2=\sum_{i,j}A_{ij}^2$ and the identity
$\| \Qb\Kb^\top\|_F^2=\sum_{i,j}(\mathbf q_i^\top \mathbf k_j)^2=\mathbf 1^\top\!\big((\Qb^\top\Qb)\odot(\Kb^\top\Kb)\big)\mathbf 1$.
\end{proof}

Thus, one obtains the identity
\[
\tr((\Qb \Kb^{\top})(\Qb \Kb^{\top})^{\top})
=\tr(\Kb^\top \Kb\,\Qb^\top\Qb)
=\mathbf{1}^\top\!\Bigl((\Kb^\top \Kb) \odot (\Qb^\top\Qb)\Bigr)\mathbf{1}.
\]

Trace normalization is a global scalar and does not ensure row-stochasticity. In practice we prefer row normalization:
\[
\Tb_{\Qb} \;=\; \Qb(\Kb^\top\Kb)\Qb^\top,\qquad
\hat{\Tb}_{\Qb} \;=\; \diag(\Tb_{\Qb}\mathbf{1})^{-1}\Tb_{\Qb},
\]
and output
\[
\operatorname{TensorAttention}(\Qb,\Kb,\Vb)_{\Qb}
~=~ \hat{\Tb}_{\Qb}\,\Vb
~=~ \diag(\Tb_{\Qb}\mathbf{1})^{-1}\,\Qb(\Kb^\top\Kb)(\Qb^\top\Vb),
\]
with an analogous expression for the $\Kb$ branch. This fixes dimension consistency by using $\Qb^\top\Vb\in\RR^{d\times d_v}$ (not $\Qb^\top\Kb$).

\emph{Remark (large-$d$ regime).} When $d$ is large, a rank-$r$ factorization $S_t^K\approx R_t^\top R_t$ with $R_t\in\RR^{r\times d}$ reduces per-token updates/state to $O(rd)$.

\subsection{Variants and Extensions}

Several modifications can be introduced to tailor the Tensor Attention mechanism to specific applications:

\paragraph{Nonnegativity before row normalization.}
Entrywise ReLU breaks positive semidefiniteness in general. To obtain nonnegative weights while preserving PSD structure and streaming,
use a \emph{nonnegative feature map} $\Phi(\cdot)\ge 0$ and replace $S^K$ by
$S^{K,+}=\Phi(\Kb)^\top \Phi(\Kb)$:
\[
\hat{\Tb}_{\Qb} \;=\; \diag\!\big(\Qb S^{K,+}\Qb^\top\,\mathbf 1\big)^{-1}\,\Qb S^{K,+}\Qb^\top,
\qquad S^{K,+}\succeq 0,~S^{K,+}\ge 0~\text{entrywise}.
\]
This keeps streaming updates by maintaining $S^{K,+}_t=S^{K,+}_{t-1}+\phi(\mathbf k_t)\phi(\mathbf k_t)^\top$ with $\phi(\cdot)\ge 0$ (e.g., positive random features).

\paragraph{Matrix exponential (optional; non-linear time).}
Applying a matrix exponential $e^{\hat{\Tb}}$ to an $n\times n$ operator generally costs $O(n^3)$ (or Krylov $O(n^2r)$), breaking the linear-time narrative. If used, restrict to a low-rank surrogate of $\Tb$ and document the approximation:
\[
\operatorname{TensorAttention}_{\mathrm{MatrixExponential}}(\Qb, \Kb, \Vb)= e^{\bigl(\tr(\Tb)\bigr)^{-1}\Tb} \Vb
= \Bigl(\mathbf{I} + \hat{\Tb} + \frac{\hat{\Tb}^2}{2!} + \frac{\hat{\Tb}^3}{3!} + \cdots\Bigr) \Vb,
\]
where the exponential can be approximated via a Taylor series or a Padé approximation. This yields signed mixing unless a nonnegative kernel and appropriate row/diagonal normalization are applied to obtain a row-stochastic map.

\paragraph{Diagonal and Row Renormalization.}
To further improve numerical stability, one may normalize along the diagonal or by rows. For diagonal renormalization, define
\[
\operatorname{TensorAttention}(\Qb, \Kb, \Vb)_{\text{DiagRenormalization},\Qb} =
\diag(\Tb_{\Qb})^{-1} \Qb (\Kb^\top \Kb) (\Qb^\top \Vb),
\]
with a corresponding expression for the $\Kb$ branch. For row normalization,
\[
\operatorname{TensorAttention}(\Qb, \Kb, \Vb)_{\text{RowRenormalization},\Qb} =
\diag\bigl(\Tb_{\Qb}\mathbf{1}\bigr)^{-1} \Qb (\Kb^\top \Kb) (\Qb^\top \Vb),
\]
and analogously for $\Kb$.

\paragraph{Additional Extensions.}
Other modifications include introducing a unidirectional attention mask,
\[
\operatorname{TensorAttention}_{\rightarrow}(\Qb, \Kb, \Vb)
~=~ \diag\!\bigl(\Tb_{\Qb,\le t}\mathbf{1}\bigr)^{-1}\,\Tb_{\Qb,\le t}\,\Vb,
\qquad
\Tb_{\Qb,\le t} \;=\; \Qb_{\le t}\bigl(\Kb_{\le t}^\top\Kb_{\le t}\bigr)\Qb_{\le t}^\top,
\]
which is causal by construction using prefix-restricted summaries. One may also incorporate a residual branch,
\[
\operatorname{TensorAttention}_{\mathrm{Residual}}(\Qb, \Kb, \Vb)= \Bigl(\Tb + \lambda \cdot \tr(\Tb) \cdot \mathbf{I}\Bigr) \Vb.
\]

\subsection*{Causality and Streaming Updates}
\noindent
In autoregressive decoding, maintain the running statistics
\[
S^K_t=\sum_{i\le t}\mathbf{k}_i\mathbf{k}_i^\top,\qquad
S^{QV}_t=\sum_{i\le t}\mathbf{q}_i\mathbf{v}_i^\top,
\]
updated in $O(d^2)$ per token. At step $t$, output
\[
\mathbf{o}_t
\;=\;
\frac{\mathbf q_t^\top \big(S_t^K+\lambda I\big)\,S_t^{QV}}
{\mathbf q_t^\top \big(S_t^K+\lambda I\big)\,\mathbf q_t+\varepsilon},
\]
which uses the row-local normalizer $\mathbf q_t^\top S_t^K\,\mathbf q_t$ and avoids materializing $n\times n$ operators. Here $\varepsilon>0$ is a small constant for numerical stability.
The prefix sums enforce causality; if masking beyond simple prefix is required, $S_t^{QV}$ can be maintained per block/window.

\paragraph{Complex-Valued Extension and Hadamard Product.}
The framework naturally generalizes to complex-valued tensors. In this case, the operators are defined as
\[
\Tb_{\Qb} = (\Qb \Kb^{\mathrm{H}})(\Qb \Kb^{\mathrm{H}})^{\mathrm{H}}, \quad
\Tb_{\Kb} = (\Qb \Kb^{\mathrm{H}})^{\mathrm{H}}(\Qb \Kb^{\mathrm{H}}),
\]
and the Hadamard product variant is
\[
\Tb_{\Qb}^{\odot} = (\Qb \Kb^{\mathrm{H}}) \odot (\Qb \Kb^{\mathrm{H}})^{\mathrm{H}}, \quad
\Tb_{\Kb}^{\odot} = (\Qb \Kb^{\mathrm{H}})^{\mathrm{H}} \odot (\Qb \Kb^{\mathrm{H}}).
\]

In summary, the Tensor Attention mechanism introduces a suite of operators and normalization techniques that both generalize the classical attention framework and offer avenues for enhanced efficiency and expressivity. Its versatility, stemming from the interplay between tensor products, renormalization, and kernel approximations, positions it as a promising building block for future deep learning architectures.

\section{Tensor Interaction}

Let the input comprise queries, keys, and values with dimensions $d_q$, $d_k$, and $d_v$, respectively. In this section we realize a \emph{channel-mixing} variant coupled to Q,K statistics. We take values in the model dimension (per head), i.e.,
$\Vb\in\RR^{n\times d}$ (or use a learned projector $\Wb^{V\to d}\in\RR^{d_v\times d}$ to map into $d$ and $\Wb^{d\to V}\in\RR^{d\times d_v}$ to map back).
For notational clarity, we denote the query matrix by
\[
\Qb = \begin{bmatrix}
\mathbf{q}_1^{\top} \\
\mathbf{q}_2^{\top} \\
\vdots \\
\mathbf{q}_n^{\top}
\end{bmatrix} \in \RR^{n \times d_{\text{model}}},
\]
with $n$ representing the number of tokens.

Exploiting the inherent parallelism of matrix operations, we define the Tensor Interaction operator as a \emph{feature-space} mixer:
\[
\operatorname{TensorInteraction}(\Qb, \Kb, \Vb)
= \Vb\,\widetilde{\mathbb T}_{\Qb},\qquad
\widetilde{\mathbb T}_{\Qb} \;=\; \frac{\mathbb T_{\Qb}}{\operatorname{tr}(\mathbb T_{\Qb})+\varepsilon},
\]
where $\mathbb{T}_{\Qb}$ encapsulates the interaction between queries and keys and the scalar normalization stabilizes scale (alternatively use $\|\mathbb T_{\Qb}\|_F$). If $d_v\neq d$, replace $\Vb$ by $\Vb\Wb^{V\to d}$ and (optionally) post-multiply by $\Wb^{d\to V}$.

The $\Qb$-side operator is defined by
\[
\begin{aligned}
\mathbb{T}_{\Qb} &= (\Qb^{\top} \Kb) (\Qb^{\top} \Kb)^{\top}
= (\Qb^{\top} \Kb)(\Kb^{\top} \Qb),
\end{aligned}
\]
while the $\Kb$-side operator is given by
\[
\begin{aligned}
\mathbb{T}_{\Kb} &= (\Qb^{\top} \Kb)^{\top} (\Qb^{\top} \Kb) \\
&= (\Kb^{\top} \Qb)(\Qb^{\top} \Kb).
\end{aligned}
\]
Both $\mathbb{T}_{\Qb}$ and $\mathbb{T}_{\Kb}$ are symmetric positive semi-definite matrices in $\RR^{d \times d}$. Forming $\Qb^{\top}\Kb$ costs $\mathcal{O}(n d^{2})$ and applying the summary costs $\mathcal{O}(n d^{2})$ overall.

For additional flexibility, one may consider a variant that leverages the Hadamard (element-wise) product to modulate the interaction dynamics. It is important to note that the standard matrix product
\[
(\Qb^{\top} \Kb)(\Qb^{\top} \Kb)^{\top}
\]
is generally not equivalent to the Hadamard product
\[
(\Qb^{\top} \Kb) \odot (\Kb^{\top} \Qb).
\]
In the standard matrix product, the $(i,j)$th entry is computed as
\[
\sum_k (\Qb^{\top}\Kb)_{ik}\,(\Qb^{\top}\Kb)_{jk},
\]
whereas the Hadamard product yields
\[
\bigl[(\Qb^{\top}\Kb) \odot (\Kb^{\top}\Qb)\bigr]_{ij} = (\Qb^{\top}\Kb)_{ij}\,(\Kb^{\top}\Qb)_{ij}.
\]

For applications where an element-wise modulation is desired, we therefore define the Hadamard variants explicitly as
\[
\begin{aligned}
\mathbb{T}^{\odot}_{\Qb} &\coloneqq (\Qb^{\top}\Kb) \odot (\Kb^{\top}\Qb), \\
\mathbb{T}^{\odot}_{\Kb} &\coloneqq (\Kb^{\top}\Qb) \odot (\Qb^{\top}\Kb).
\end{aligned}
\]
These definitions capture the direct, entrywise interaction between $\Qb^{\top}\Kb$ and its transpose, providing a finer modulation of the interaction dynamics compared to the standard matrix product.

\paragraph{Token-conditioned channel mixing.}
To increase adaptivity beyond a globally shared mixer, we introduce a token-conditioned variant with the same streaming summaries:
\[
\mathbf o_t
~=~ \mathbf v_t\,\frac{\big(\Qb^\top\Kb\big)\big(\Kb^\top \mathbf q_t\big)}{\mathbf q_t^\top \big(S_t^K+\lambda I\big)\mathbf q_t+\varepsilon},
\qquad
S_t^K=\sum_{i\le t}\mathbf k_i\mathbf k_i^\top.
\]
This keeps $O(d^2)$ per-token complexity and depends on $\mathbf q_t$ through a $d$-dimensional vector $\Kb^\top \mathbf q_t$. A block-local variant restricts the sums to sliding windows to reduce over-smoothing.

\paragraph{Windowed interaction.}
For long sequences, compute $\mathbb T$ on sliding windows of size $w$:
$\Qb^\top\Kb \rightarrow \Qb_{[t-w,t]}^\top \Kb_{[t-w,t]}$, trading global context for improved locality and stability.

In summary, the Tensor Interaction mechanism provides an efficient and theoretically sound framework for capturing higher-order dependencies between tokens by leveraging structured tensor operations and feature-space normalization. This formulation is computationally efficient and complements the token-mixing view of Tensor Attention.

\section{Related Work}

\paragraph{Large Language Models.}
Language models have evolved into extremely large-scale neural networks~\citep{devlin2018bert, raffel2020exploring, radford2018improving, radford2019language, brown2020language, OpenAI2023GPT4TR}, demonstrating impressive performance across a diverse array of tasks. Notably, GPT-3~\citep{brown2020language} and its successors, including Gopher~\citep{rae2021scaling}, PaLM~\citep{chowdhery2022palm}, GLaM~\citep{du2022glam}, Chinchilla~\citep{hoffmann2022training}, Megatron–Turing NLG~\citep{smith2022using}, LaMDA~\citep{thoppilan2022lamda}, OPT~\citep{zhang2022opt}, LLaMA~\citep{touvron2023llama}, PaLM 2~\citep{anil2023palm}, and GPT-4~\citep{OpenAI2023GPT4TR}, have collectively established that large auto-regressive language models can achieve high-quality results without the need for extensive task-specific data collection or parameter updates.

\paragraph{Vision Transformers.}
Convolutional Neural Networks (CNNs) have historically served as the benchmark for visual tasks, underpinned by extensive research~\citep{DBLP:journals/pieee/LeCunBBH98, DBLP:conf/nips/KrizhevskySH12, DBLP:conf/cvpr/SzegedyLJSRAEVR15, DBLP:conf/cvpr/HeZRS16, DBLP:conf/cvpr/XieGDTH17, DBLP:journals/corr/HowardZCKWWAA17, DBLP:conf/icml/TanL19, DBLP:conf/cvpr/0003MWFDX22}. Concurrently, Transformers have revolutionized language tasks~\citep{devlin2018bert, radford2018improving, liu2019roberta, radford2019language, raffel2020exploring, brown2020language, chowdhery2022palm, fedus2022switch, ouyang2022training}. Building upon these advances, Vision Transformers (ViT), as proposed by \citet{DBLP:conf/iclr/DosovitskiyB0WZ21}, have demonstrated performance comparable to—or even surpassing—that of CNNs in image recognition, thereby inspiring further innovations in model architecture~\citep{DBLP:conf/iccv/LiuL00W0LG21, DBLP:conf/iccv/WangX0FSLL0021, DBLP:journals/cvm/WangXLFSLLLS22, DBLP:conf/cvpr/DongBCZYYCG22, DBLP:conf/nips/AliTCBDJLNSVJ21, DBLP:journals/corr/abs-2111-07832, DBLP:journals/corr/abs-2103-00112, DBLP:conf/cvpr/Liu0LYXWN000WG22, DBLP:conf/nips/RiquelmePMNJPKH21, DBLP:conf/cvpr/Zhai0HB22, DBLP:conf/nips/DaiLLT21}.

\paragraph{Self-Supervised Learning in Vision Tasks.}
Self-Supervised Learning (SSL) methods—including both contrastive and non-contrastive techniques—facilitate the learning of rich representations by leveraging diverse views or augmentations of inputs without relying on human-annotated labels~\citep{chen2020simple, hjelm2018learning, wu2018unsupervised, tian2019contrastive, chen2021exploring, gao2021simcse, bachman2019learning, oord2018representation, ye2019unsupervised, henaff2020data, misra2020self, caron2020unsupervised, haochen2021provable, caron2021emerging,li2021self, zbontar2021barlow, tsai2021note, bardes2021vicreg, tian2020makes, robinson2021contrastive, dubois2022improving, tan2023contrastive, zhang2023matrix}. Empirical evidence suggests that these approaches often surpass supervised learning across a variety of tasks. Simultaneously, Masked Image Modeling (MIM) methods—such as MAE and SimMIM—have emerged as innovative strategies in the visual domain, yielding promising results in vision tasks~\citep{DBLP:conf/cvpr/HeCXLDG22, DBLP:conf/cvpr/Xie00LBYD022, DBLP:journals/corr/abs-2205-13515, ren2023deepmim, DBLP:conf/iclr/Bao0PW22, cao2022understand, tan2023information}. Inspired by Masked Language Modeling (MLM) in natural language processing and exemplified by BERT~\citep{devlin2018bert}, MIM has played a pivotal role in advancing visual representation learning.

\paragraph{Vision-Language Models.}
Recent developments in vision-language models have increasingly harnessed the power of large language models (LLMs) to enhance multimodal understanding. For instance, BLIP-2~\citep{li2023blip} introduces a novel pre-training strategy that integrates off-the-shelf frozen image encoders with large language models. This approach, which employs a Querying Transformer alongside a two-stage pre-training process, achieves state-of-the-art performance in various vision-language tasks while utilizing substantially fewer trainable parameters. Similarly, BEIT-3~\citep{wang2022image} pushes the boundaries of multimodal foundation models by employing Multiway Transformers and a unified pretraining approach on images, texts, and image-text pairs, thereby attaining top-tier results across a wide range of tasks. More recently, MiniGPT-4~\citep{zhu2023minigpt} exemplifies the integration of advanced LLMs within vision-language frameworks. By aligning a visual encoder with the Vicuna LLM, MiniGPT-4 demonstrates capabilities such as generating detailed image descriptions, creative writing inspired by visual inputs, and producing instructional content based on images. This model highlights the importance of high-quality, aligned datasets and underscores the efficiency of training primarily a projection layer for image-text alignment. Collectively, these advancements emphasize the critical role of large language models in enhancing vision-language understanding and generative capabilities.

\section{Conclusion}

In this work, we presented a comprehensive framework for integrating tensor operations into the attention mechanism, culminating in the \emph{Deep Tensor Network}. By introducing the Tensor Attention and Tensor Interaction mechanisms, we have extended the classical attention paradigm to capture richer, higher-order dependencies while maintaining computational efficiency. Moreover, our framework offers a versatile toolkit that can be adapted and extended to various deep learning architectures.

The proposed approach not only bridges the gap between theory and practical neural network design but also opens up new avenues for future research. Potential directions include exploring further tensor simplification techniques, integrating tensor-based methods with other state-of-the-art architectures, and extending our analysis to complex-valued domains. We believe that the Deep Tensor Network framework will inspire novel research in both theoretical and applied machine learning, leading to more robust and efficient models in the future.

\vspace{5ex}
\bibliographystyle{plainnat}
\bibliography{reference}




\end{document}